\documentclass[conference]{IEEEtran}
\pdfoutput=1 
\IEEEoverridecommandlockouts

\usepackage{cite}
\usepackage{amsmath,amssymb,amsfonts}
\usepackage{algorithmic}
\usepackage{algorithm}
\usepackage{graphicx}
\usepackage{booktabs}
\usepackage{textcomp}
\usepackage{xcolor}
\usepackage{subcaption} 
\usepackage{amsthm}
\usepackage{multirow}
\usepackage{makecell}
\usepackage[hidelinks,breaklinks]{hyperref}
\theoremstyle{definition}
\newtheorem{definition}{Definition}

\newtheorem{proposition}{Proposition}

\begin{document}

\title{FedTopo: Relation-Level Topology Sharing for Model-Heterogeneous Federated Learning
\thanks{$^{*}$ Corresponding author: Jing Wang and Lipo Wang}
}

\author{\IEEEauthorblockN{1\textsuperscript{st} Zhaoyang Ma}
\IEEEauthorblockA{
\textit{Beijing Jiaotong University}\\
Beijing, China \\
zhy.ma@bjtu.edu.cn}
\and
\IEEEauthorblockN{2\textsuperscript{nd} Zhihao Wu}
\IEEEauthorblockA{
\textit{Beijing Jiaotong University}\\
Beijing, China \\
zhwu@bjtu.edu.cn}
\and
\IEEEauthorblockN{3\textsuperscript{rd} Xin Gao}
\IEEEauthorblockA{
\textit{Nanyang Technological University}\\
Singapore, Singapore \\
N2505177F@e.ntu.edu.sg}
\and
\IEEEauthorblockN{4\textsuperscript{th} Lipo Wang $^{*}$}
\IEEEauthorblockA{
\textit{Nanyang Technological University}\\
Singapore, Singapore \\
ELPWang@ntu.edu.sg}
\and
\IEEEauthorblockN{5\textsuperscript{th} Youfang Lin}
\IEEEauthorblockA{\textit{Beijing Jiaotong University}\\
Beijing, China \\
yflin@bjtu.edu.cn}
\and
\IEEEauthorblockN{6\textsuperscript{th} Jing Wang $^{*}$}
\IEEEauthorblockA{\textit{Beijing Jiaotong University}\\
Beijing, China \\
wj@bjtu.edu.cn}
}

\maketitle

\begin{abstract}
Federated learning (FL) enables collaborative learning over decentralized data silos without centralizing raw data. However, heterogeneous local architectures often induce non-aligned representation spaces, making it difficult to transfer global knowledge across silos.
Existing paradigms share this knowledge as model parameters, distilled predictions, or class prototypes, yet all encode it in an absolute space that must be aligned across clients. 
Heterogeneous backbones break this alignment, so the shared knowledge becomes unreliable and misleads local training. 
We propose FedTopo, a relation-level framework that encodes global knowledge as class relation topology, capturing how classes relate within each client rather than where they lie in feature space. 
Each client builds its relation topology from local prototypes and uploads it with class statistics.  
The server then aggregates these relations in a reliability-aware manner that down-weights weakly supported ones, and broadcasts the global topology to clients. 
The global topology guides local training by emphasizing topology-similar negative classes. 
Experiments on three datasets under eight heterogeneous backbones show that FedTopo consistently outperforms parameter-, distillation-, and prototype-sharing baselines, with low communication and no inference overhead.
Our code is available at \url{https://github.com/Zhaoyang-Ma/FedTopo}.

\end{abstract}

\begin{IEEEkeywords}
Federated Learning, Model Heterogeneity, Knowledge Sharing, Class Relation Topology, Non-IID Data.
\end{IEEEkeywords}

\section{Introduction}

Federated learning (FL) has become an important paradigm for collaborative analytics over decentralized data silos, where multiple institutions, platforms, or edge devices jointly exploit distributed data without centralizing raw records \cite{mcmahan2017communication, yang2019federated, kairouz2021advances, liu2021projected}. Conventional FL assumes that all clients share an identical architecture, so the server can aggregate their parameters directly \cite{mcmahan2017communication}. In practice, however, decentralized data silos often differ in computational resources, deployment constraints, and personalization needs \cite{li2020federated, luopan2023fedknow}. Model-heterogeneous FL removes this assumption by allowing clients to keep different architectures \cite{diaoheterofl}. The central challenge then becomes transferring global knowledge across heterogeneous models and their non-aligned representation spaces \cite{li2019fedmd, tan2022fedproto}.

\begin{figure}
    \centering
    \includegraphics[width=\linewidth]{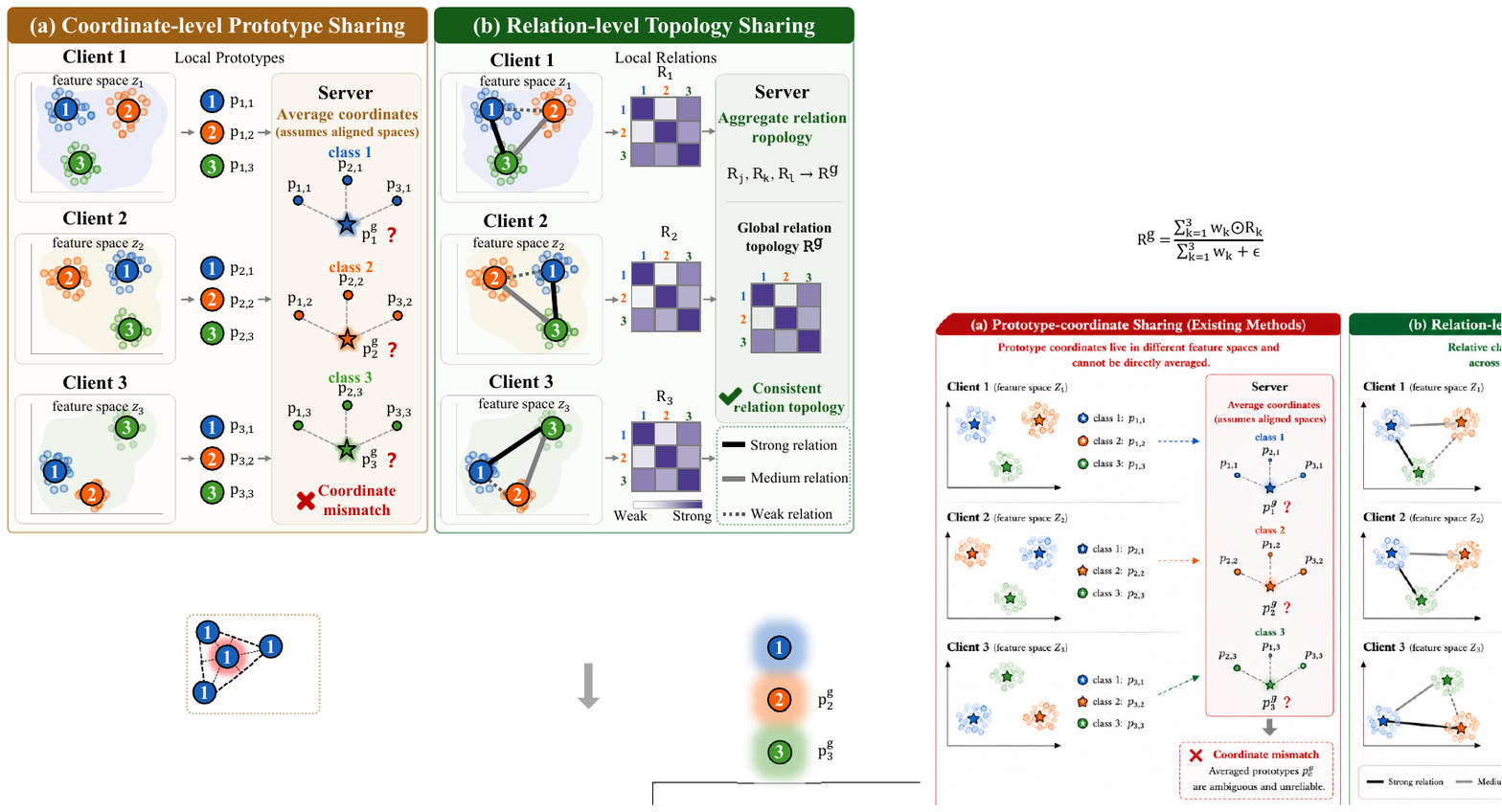}
    \caption{Motivation of FedTopo: from coordinate-level prototypes to relation-level topology. Rather than averaging prototype coordinates across non-aligned feature spaces, FedTopo aggregates intra-client class relations into a more consistent global topology.}
    \label{fig:motivation}
\end{figure}

Existing model-heterogeneous FL methods mainly address this challenge through different global knowledge-sharing paradigms \cite{zhang2025htfllib}. Partial parameter-sharing methods keep feature extractors heterogeneous while sharing lightweight homogeneous modules, such as classifier heads \cite{liang2020think, zhu2021data, chen2022bridging, yi2023fedgh, liang2025fedecover}. Knowledge distillation-based methods transfer prediction-level information, usually through public or shared auxiliary data that serves as a common input interface \cite{li2019fedmd, lin2020ensemble, wu2022communication, weng2026fedskd, li2026fedcd}. Prototype-sharing methods exchange class-wise prototypes as compact semantic summaries of each class \cite{zhang2024fedtgp, jeong2018communication, tan2022fedproto, zhang2024upload}. Despite their differences, all three paradigms encode global knowledge in an \emph{absolute} space that must be aligned across clients, whether as shared weights, a shared prediction interface, or a shared feature coordinate system. Such alignment rarely holds under model heterogeneity.

Among these paradigms, we focus on prototype sharing, which transfers knowledge at the class level without aggregating parameters or requiring auxiliary public data \cite{tan2022fedproto}. In this paradigm, each client computes class prototypes in its own feature space, and the server averages them into global prototypes. This implicitly assumes that prototypes from different clients are comparable at the coordinate level. However, under heterogeneous backbones, the same feature dimension may encode different semantic factors across clients \cite{zhang2024fedtgp}. As a result, directly averaging such prototypes may yield unreliable global prototypes and biased local guidance \cite{liu2024model}. 

The root of this unreliability lies in comparing \emph{absolute} coordinates across clients. A natural alternative is to share how classes relate to one another within each client, rather than where each class lies. We call this relative inter-class structure \emph{relation-level topology}, whose distinction from coordinate-level prototypes is illustrated in Fig.~\ref{fig:motivation}. Even when absolute representations are inconsistent across clients, semantically related classes tend to preserve similar neighborhood structures. Relation topology can therefore serve as a weaker yet more transferable knowledge carrier than absolute prototypes.

In this paper, we propose \textbf{Fed}erated \textbf{Topo}logy-guided Learning (\textbf{FedTopo}), a relation-level knowledge sharing framework for model-heterogeneous FL. FedTopo represents global knowledge as class-level relation topology, capturing the relative structure among classes rather than their absolute positions. Each client first derives local class relations from its prototypes, and the server aggregates the valid ones into a global topology. Because each client observes only a few class pairs under non-IID data, FedTopo also estimates a relation reliability matrix that down-weights weakly supported relations during aggregation, yielding a more robust global topology. The resulting topology then guides local training by emphasizing topology-similar negative classes, transferring global knowledge without any coordinate-level alignment. We evaluate FedTopo on CIFAR-10, CIFAR-100, and Tiny-ImageNet under both Dirichlet and pathological non-IID settings with eight heterogeneous backbones. It consistently improves over parameter-, distillation-, and prototype-sharing baselines while incurring low communication and no inference overhead. 

The main contributions of this work are summarized as follows:
\begin{itemize}
    \item We propose \textbf{FedTopo}, a model-heterogeneous FL framework that shares relation-level class topology rather than model parameters or prototype coordinates.
    \item We develop reliability-aware topology aggregation that suppresses unreliable relations, and topology-guided local training that sharpens class discrimination.
    \item Experiments across three datasets and eight heterogeneous backbones show that FedTopo consistently outperforms parameter-, distillation-, and prototype-sharing baselines, with low communication and no inference overhead.
\end{itemize}

\begin{figure*}
    \centering
    \includegraphics[width=\linewidth]{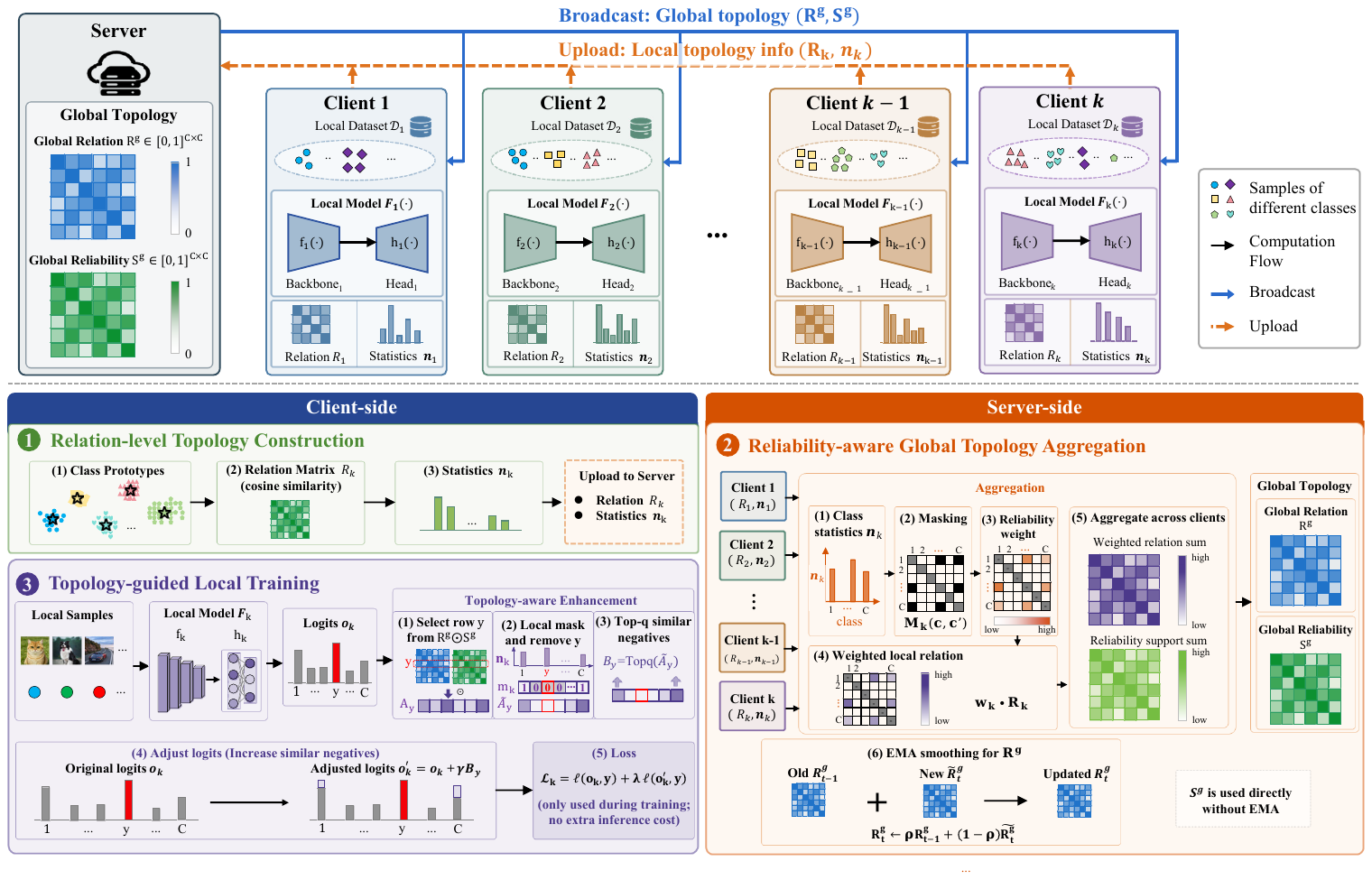}
    \caption{Framework of FedTopo. Each client constructs and uploads its local relation topology $R_k$ and class statistics $\mathbf{n}_k$. The server aggregates them into a global topology $R^g$ and a reliability matrix $S^g$, which are broadcast back to clients for topology-guided local training without coordinate-level feature alignment.}
    \label{fig:main}
\end{figure*}

\section{Related Work}
\label{sec:related}

\textbf{Model-heterogeneous Federated Learning.} 
Conventional FL aggregates parameters across clients and therefore assumes an identical architecture, which is restrictive when clients differ in computational resources, data distributions, and personalization needs \cite{mcmahan2017communication, li2020federated, kairouz2021advances}. Model-heterogeneous FL removes this assumption by allowing clients to keep different architectures \cite{diaoheterofl, yuan2024hetefedrec}, turning the central problem into transferring global knowledge across non-aligned local models \cite{zhang2025htfllib}. Existing studies enable this collaboration through three knowledge-sharing paradigms, namely partial parameter sharing \cite{liang2020think, collins2021exploiting}, prototype sharing \cite{tan2022fedproto, zhang2024fedtgp}, and knowledge distillation-based sharing \cite{shen2020federated, wu2022communication}.

\textbf{Partial parameter sharing.}
Partial parameter sharing methods keep local backbones heterogeneous while introducing lightweight homogeneous modules to carry global knowledge \cite{diaoheterofl}. One common design decouples each local network into a heterogeneous backbone and a shared classifier head that is aggregated on the server. LG-FedAvg \cite{liang2020think} aggregates these homogeneous classifier heads on the server. FedGen \cite{zhu2021data} adds a server-side generator that produces generalized features for classifier alignment, whereas FedGH \cite{yi2023fedgh} refines the global head from aggregated class-wise representations. FedRoD \cite{chen2022bridging} extends this idea with a dual-head design that jointly optimizes a generic global head and a personalized local head to counter data skew. FedSCE \cite{zhang2025subspace} instead targets the aggregation rather than the architecture, constraining each client's layer updates to a dynamically maintained local subspace and weighting contributions by their feature- and parameter-space update distances. A complementary line treats the global parameter space as a master network to fit heterogeneous hardware budgets. For instance, FedEcover \cite{liang2025fedecover} extracts submodels by sampling weight subsets without replacement, letting resource-constrained devices contribute partial updates. Overall, this paradigm collaborates by confining weight synchronization to shared parameter segments or lightweight homogeneous modules.

\textbf{Knowledge distillation-based sharing.}
Knowledge distillation-based methods transfer knowledge at the prediction level, usually through a public or shared auxiliary dataset that serves as a common input space \cite{li2019fedmd, lin2020ensemble}. These methods typically rely on auxiliary models or mutual distillation rather than a shared dataset alone. FML \cite{shen2020federated} shares output logits to co-train the auxiliary and local models, while FedKD \cite{wu2022communication} applies adaptive mutual distillation with a lightweight student to lower communication cost. FedMRL \cite{yi2024federated} adds a small shared auxiliary model and fuses auxiliary and local representations for personalized inference. More recent methods extend this paradigm further, with FedSKD \cite{weng2026fedskd} using similarity-based transfer and FedCD \cite{li2026fedcd} combining feature- and logit-level distillation. In short, this paradigm conveys global knowledge through shared prediction interfaces, auxiliary data, or auxiliary models.

\textbf{Prototype sharing.}
Prototype sharing methods transmit lightweight class-wise prototypes as global semantic knowledge \cite{tan2022fedproto}. Local prototypes are extracted from client representations, aggregated into global prototypes on the server, and then used to guide local training. FD \cite{jeong2018communication} applies prototype guidance in the logit space, whereas FedProto \cite{tan2022fedproto} and FedTGP \cite{zhang2024fedtgp} build prototypes in intermediate feature spaces for representation-level collaboration. FedTGP additionally improves prototype quality by adaptively enlarging inter-class discriminability among global prototypes. FedKTL \cite{zhang2024upload} instead uses a server-side generator to synthesize prototype-conditioned images for local training, though it is designed mainly for image tasks. FedRE \cite{yao2026fedre} entangles each client's prototypes and their one-hot labels with shared random weights into a single representation, which the server uses to train a global classifier. Among these paradigms, both parameter sharing and prototype sharing encode knowledge in an absolute coordinate space, whose cross-client consistency rarely holds under model heterogeneity. These limitations motivate the relation-level topology sharing studied in this work.

\section{Problem Statement and Motivation}
\subsection{Heterogeneous Federated Learning Setting}

We consider an FL setting with $N$ clients. Each client $k$ holds a private dataset $\mathcal{D}_k = \left\{ (x_i, y_i) \right\}_{i=1}^{n_k}$ sampled from a client-specific distribution $\mathcal{P}_k$. Let $n_{k,c}=\left|{(x,y)\in\mathcal{D}_k \mid y=c}\right|$ denote the number of class-$c$ samples on client $k$. 

Unlike homogeneous FL, each client here may use its own model architecture. The local model of client $k$ comprises a backbone $f_k(\,\cdot\,; \theta_k^b)$ and a classifier head $h_k(\,\cdot\,; \theta_k^h)$. Given an input $x$, client $k$ first extracts the representation $z_k = f_k(x; \theta_k^b)$, and then produces the prediction logits $o_k = h_k(z_k; \theta_k^h)$. Since $f_k$ varies across clients, the representations $z_k$ may lie in client-specific feature spaces with inconsistent coordinate semantics. 

For each client $k$, the local empirical risk is defined as
\begin{equation}
F_k(\theta_k)= \frac{1}{n_k} \sum_{(x,y)\in \mathcal{D}_k} \ell\big(h_k\big(f_k(x;\theta_k^b);\theta_k^h\big),y\big),
\end{equation}
where $\theta_k={\theta_k^b,\theta_k^h}$, $n_k=|\mathcal{D}_k|$, and $\ell(\cdot,\cdot)$ is the cross-entropy loss. Our goal is to enhance these heterogeneous local models ${\theta_k}_{k=1}^N$ through collaborative knowledge sharing, rather than to learn a single global model or average parameters directly. 

However, independently minimizing each $F_k$ may lead to biased local decision boundaries under non-IID data. Therefore, the key challenge is to exploit global knowledge across heterogeneous clients without direct parameter aggregation or coordinate-level feature alignment.

\subsection{Motivation: From Absolute to Relational Knowledge Sharing}

Exploiting global knowledge across heterogeneous clients first requires deciding what form of knowledge to share. Among the three paradigms in Section~\ref{sec:related}, knowledge distillation needs a shared public or auxiliary dataset as a common interface \cite{li2019fedmd}. Such data is often unavailable under strict privacy constraints, so we target the data-free setting, where knowledge is shared as model parameters or class prototypes. 

Both data-free options, however, encode global knowledge in an \emph{absolute} space that must be aligned across clients. Parameter sharing presumes matched architectures and aligned weights, while prototype sharing presumes comparable feature coordinates. Neither holds reliably under heterogeneous backbones, so directly aggregating parameters or prototypes yields misleading guidance for local training. 

These assumptions motivate a weaker, \emph{relational} form of global knowledge that avoids comparing absolute coordinates across clients. Instead of sharing absolute parameters or prototype coordinates, we share relative class topology, that is, how classes relate within each client's local feature space. Because this depends only on intra-client geometry, it relaxes the need for cross-client representation consistency.

\section{Proposed Framework}

\subsection{Framework Overview}

Motivated by the cross-client coordinate mismatch analyzed above, FedTopo transfers global knowledge as relation-level class topology rather than as absolute feature representations. FedTopo therefore models how classes relate within each client and assembles these local relations into a global class topology, without aggregating heterogeneous parameters or absolute prototypes.

As illustrated in Fig.~\ref{fig:main}, FedTopo comprises three collaborative stages, namely \textbf{Relation-level Topology Construction}, \textbf{Reliability-aware Global Topology Aggregation}, and \textbf{Topology-guided Local Training}. Algorithm~\ref{alg:fedtopo} summarizes
the overall procedure.

At the beginning of each communication round, every client trains its own heterogeneous local model on its private data. Instead of sharing model parameters or feature representations, each client builds a local relation topology $R_k$ that captures the semantic relationships among classes within its own feature space. It simultaneously records the local class statistics $\mathbf{n}_k = \big(n_{k,c}\big)_c$, i.e., the per-class sample counts that quantify the statistical support of these relationships. Only the relation-level topology information $(R_k,\mathbf{n}_k)$ is uploaded to the server.

The server aggregates the uploaded relations $(R_k,\mathbf{n}_k)$ in a reliability-aware manner into a global topology $(R^g,S^g)$, where $R^g$ is the global relation matrix and $S^g$ its reliability matrix. During aggregation, weakly observed or unreliable class relations are down-weighted to improve the robustness of the global estimate. The server then broadcasts $(R^g,S^g)$ to the participating clients.

On receiving $(R^g,S^g)$, each client performs topology-guided local training, in which topology-similar negative classes are selectively emphasized to sharpen local discrimination. The global topology is used only during training, so it adds no inference overhead at test time.

\subsection{Relation-level Topology Construction}

In the first stage, each client $k$ extracts class prototypes from its own representation space. Given the local backbone $f_k(\cdot;\theta_k^b)$, an input sample $x$ is mapped to a representation $z_k=f_k(x;\theta_k^b)$. The prototype of class $c$ is the mean of its sample representations
\begin{equation}
p_{k,c}=\frac{1}{n_{k,c}}\sum_{(x,y)\in\mathcal{D}_k,\,y=c} z_k,
\label{eq:prototype}
\end{equation}
where $n_{k,c}$ is the number of class-$c$ samples on client $k$. When $n_{k,c}=0$, class $c$ is absent from client $k$ and produces no prototype. 

From these prototypes, FedTopo constructs a local relation topology $R_k\in[0,1]^{C\times C}$ that encodes the pairwise semantic relationships among classes. For any two classes $c$ and $c'$ that are both present on client $k$, we measure the cosine similarity of their prototypes and rescale it from $[-1, 1]$ to $[0,1]$
\begin{equation}
R_k(c,c') =
\frac{1+\cos(p_{k,c},p_{k,c'})}{2}
=
\frac{1}{2}
\left(
1+
\frac{p_{k,c}^{\top}p_{k,c'}}
{\|p_{k,c}\|\|p_{k,c'}\|}
\right).
\label{eq:local_relation}
\end{equation}

A larger $R_k(c,c')$ thus indicates a stronger semantic relation between the two classes in client $k$'s feature space.  

A relation requires both prototypes, so $R_k(c,c')$ is defined only when $n_{k,c}>0$ and $n_{k,c'}>0$. Otherwise the entry is masked out and excluded from the server-side aggregation in Section~\ref{sec:agg}.

Importantly, the prototypes $p_{k,c}$ never leave their client. They are used only to derive the intra-client relations $R_k$, and are neither uploaded nor compared across clients. As a result, $R_k$ encodes \emph{relative} class geometry rather than \emph{absolute} feature coordinates, which is precisely what makes relation-level sharing viable under backbone heterogeneity. FedTopo thus requires no common cross-client coordinate system, only that each client's local class geometry preserves transferable semantic structure.

\subsection{Reliability-aware Global Topology Aggregation}
\label{sec:agg}

Each participating client uploads its local relation matrix $R_k$ together with the per-class sample counts $\mathbf{n}_k$. From these, the server constructs a global relation topology $R^g$ that summarizes class-level semantic relationships across decentralized data. These statistics let the server gauge how reliable each local relation is. Compared with methods that transmit parameters, logits, features, or class prototypes, FedTopo shares only relation-level topology and coarse class-count statistics. Since feature representations and prototype coordinates remain local, it reduces direct exposure of client-specific representation information, while the count statistics can be replaced by noisy counts in privacy-sensitive deployments. 

A naive solution would average the local matrices $\{R_k\}$ entry by entry, but this is unreliable under non-IID data for two reasons. First, clients observe different subsets of classes, so a class pair that is absent on a client produces no valid relation at all. Second, even an observed relation can be statistically unstable when the two classes are supported by only a few local samples. FedTopo addresses both with a binary mask that discards unobserved class pairs and a reliability weight that down-weights weakly supported ones. 

For client $k$, the class-pair observation mask
$M_k\in\{0,1\}^{C\times C}$ is defined as 
\begin{equation}
M_k(c,c') =
\mathbb{I}[n_{k,c}>0]\cdot \mathbb{I}[n_{k,c'}>0],
\label{eq:observation_mask}
\end{equation}
where $M_k(c,c')=1$ indicates that both classes are observed on client $k$, and
only such pairs are eligible for aggregation.

The mask alone does not solve the second issue. An observed relation can still be unreliable when the two classes are weakly supported, so FedTopo attaches a pair-wise reliability weight to each surviving relation
\begin{equation}
w_k(c,c') =
M_k(c,c')\cdot
\frac{n_{k,c}n_{k,c'}}
{n_{k,c}+n_{k,c'}+\epsilon},
\label{eq:pair_weight}
\end{equation}
where $\epsilon$ is a small constant for numerical stability. The factor $\frac{n_{k,c}n_{k,c'}}{n_{k,c}+n_{k,c'}+\epsilon}$ is proportional to the harmonic mean of the two class counts. It grows large only when \emph{both} classes are well represented, and collapses toward zero as soon as either class is scarce. A weakly supported relation therefore contributes little to the global topology.

With these weights, the per-round global relation for each class pair is the reliability-weighted average of its valid local relations
\begin{equation}
\widetilde{R}^g_t(c,c') =
\frac{\sum_{k\in\mathcal{P}^{(t)}} w_k(c,c')\,R_k(c,c')}
     {\sum_{k\in\mathcal{P}^{(t)}} w_k(c,c')+\epsilon},
\label{eq:reliability}
\end{equation}
where $\mathcal{P}^{(t)}$ is the set of clients participating in round $t$. If no participating client provides a valid observation for a class pair, its global relation is set to zero.

The weights $w_k$ down-weight unreliable relations \emph{within} the aggregation, but they do not record how much total support a global relation received \emph{across} clients. FedTopo captures this with a relation reliability matrix $S^g\in[0,1]^{C\times C}$
\begin{equation}
S^g(c,c') =
\frac{
\sum_{k\in\mathcal{P}^{(t)}} w_k(c,c')
}{
\max_{a,b}\sum_{k\in\mathcal{P}^{(t)}} w_k(a,b)+\epsilon
}.
\label{eq:global_reliability}
\end{equation}

The denominator normalizes by the most strongly supported class pair, so $S^g(c,c')$ measures the relative amount of reliable evidence behind each global relation. Different class pairs can be backed by very different numbers of reliable observations. Without it, every global relation would steer local training with equal force, even one estimated from a single sparse client. $S^g$ therefore lets FedTopo modulate the influence of each relation by how well it is supported. 

Finally, to reduce round-to-round fluctuation, FedTopo smooths the global topology with an exponential moving average. Given the aggregated topology $\widetilde{R}^g_t$ from Eq.~\ref{eq:reliability}, the broadcast topology is
\begin{equation}
R^g_t \leftarrow \rho\, R^g_{t-1} + (1-\rho)\,\widetilde{R}^g_t,
\label{eq:ema}
\end{equation}
where $\rho\in[0,1)$ controls the smoothing strength. For class pairs without any valid observation in round $t$, the previous value $R^g_{t-1}$ is retained so that transiently unobserved relations are not erased.

\begin{algorithm}[t]
\caption{FedTopo Framework.}
\label{alg:fedtopo}
\begin{algorithmic}[1]
\item[] \textbf{Input:} \(N\) clients with heterogeneous local models \(\{F_k=h_k\circ f_k\}_{k=1}^{N}\), datasets \(\{\mathcal{D}_k\}\), \(C\) classes, rounds \(T\), local epochs \(E\), hyperparameters \(\lambda,\gamma,q,\rho\).
\item[] \textbf{Output:} personalized local models \(\{F_k\}_{k=1}^{N}\). 
\item[] \textbf{Initialize:} \(R^g_0=I_C\) as the default self-relation prior and \(S^g_0=\mathbf{0}\) as zero initial reliability.

\FOR{round \(t=1,\ldots,T\)}
    \STATE Server samples clients \(\mathcal{P}^{(t)}\) and broadcasts \((R^g_{t-1},S^g_{t-1})\).
    \item[] \hspace{\algorithmicindent}\textit{// Client update}
    \FOR{each client \(k\in\mathcal{P}^{(t)}\) in parallel}
        \STATE Train \(F_k\) for \(E\) epochs with the topology-guided loss (Eqs.~(\ref{eq:topo_strength})–(\ref{eq:topo_loss})) using \((R^g_{t-1},S^g_{t-1})\).
        \STATE Build prototypes and relation topology \(R_k\) (Eqs.~(\ref{eq:prototype}),~(\ref{eq:local_relation})); record class statistics \(\mathbf{n}_k\).
        \STATE Upload \((R_k,\mathbf{n}_k)\).
    \ENDFOR
    \item[] \hspace{\algorithmicindent}\textit{// Server update}
    \STATE Compute masks \(M_k\) and reliability weights \(w_k\) (Eqs.~(\ref{eq:observation_mask}),~(\ref{eq:pair_weight})).
    \STATE Aggregate into \(\widetilde{R}^g_t\) (Eq.~(\ref{eq:reliability})) and estimate \(S^g_t\) (Eq.~(\ref{eq:global_reliability})).
    \STATE Smooth: \(R^g_t \leftarrow \rho R^g_{t-1}+(1-\rho)\widetilde{R}^g_t\) (Eq.~(\ref{eq:ema})).
\ENDFOR
\RETURN \(\{F_k\}_{k=1}^{N}\).
\end{algorithmic}
\end{algorithm}

\subsection{Topology-guided Local Training}

Client $k$ receives the broadcast global topology $R^g_t$ and its reliability matrix $S^g$, and uses them as training-time guidance rather than as a feature-alignment target. We drop the round index $t$ and write $R^g$ throughout this subsection. The goal is not to force local representations into a shared coordinate space, but to sharpen each client's ability to discriminate among topologically similar classes. Concretely, when $R^g$ indicates that a negative class lies close to the ground-truth class and $S^g$ marks this relation as reliable, FedTopo amplifies that class's competition with the ground-truth class during local training.
\begin{figure*}
    \centering
    \includegraphics[width=\linewidth]{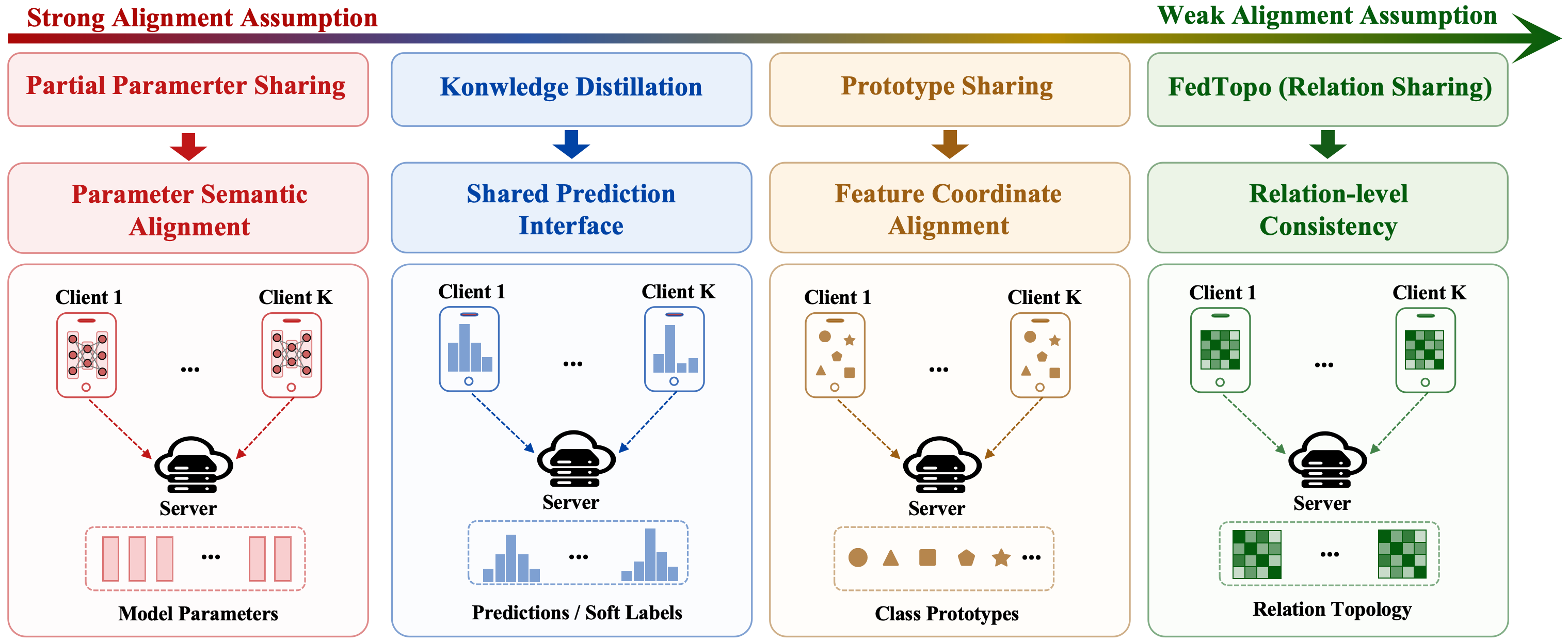}
    \caption{Comparison of global knowledge-sharing paradigms in model-heterogeneous FL. 
}
    \label{fig:paradigm}
\end{figure*}
For a training sample $(x,y)$ on client $k$, FedTopo reads out the row of the global topology indexed by the ground-truth class $y$. The raw guidance strength over candidate negative classes is
\begin{equation}
A_y = R^g(y,:) \odot S^g(y,:),
\label{eq:topo_strength}
\end{equation}
where $\odot$ denotes element-wise multiplication. The factor $R^g(y,:)$ indicates which classes are topologically similar to $y$, while $S^g(y,:)$ down-weights similarities that rest on weak global evidence, so a negative class receives a high score only when it is both similar and reliably so. 

Because client $k$ may hold only a subset of classes, the guidance must be restricted to negative classes that actually exist locally. Let $1_k(c) = \mathbb{I}\left[n_{k,c} > 0\right]$ indicate whether class $c$ is present on client $k$. The guidance strength is then masked as
\begin{equation}
\widetilde{A}_y(c)
=
A_y(c)\cdot 1_k(c)\cdot \mathbb{I}[c\neq y],
\end{equation}
where the indicator $1_k(c)$ removes locally absent classes and $\mathbb{I}[c\neq y]$ removes the ground-truth class itself.

A class may still be weakly related to many others, so applying every surviving relation would inject dense and partly noisy feedback. FedTopo therefore retains only the strongest entries of \(\widetilde{A}_y\). Specifically, a top-\(q\) operator keeps the $q$ largest entries
\begin{equation}
B_y = \operatorname{Top}_q(\widetilde{A}_y).
\label{eq:topq}
\end{equation}
The operator thus focuses the auxiliary signal on the few negatives that are both most confusable with $y$ and most reliably related to it.

Let $o_k=h_k(f_k(x;\theta_k^b);\theta_k^h)$ be the original logits, where $f_k$ is the backbone and $h_k$ the local classification head. FedTopo then forms topology-modulated logits by adding the selected guidance to the logits of the chosen negatives
\begin{equation}
o'_k = o_k + \gamma B_y,
\label{eq:modulated_logits}
\end{equation}
where $\gamma>0$ controls the modulation strength. The local objective combines the standard loss with a topology-aware auxiliary term
\begin{equation}
\mathcal{L}_k
=
\ell(o_k,y)
+
\lambda \ell(o'_k,y),
\label{eq:topo_loss}
\end{equation}
where $\ell$ is the cross-entropy loss and $\lambda > 0$ balances the standard term $\ell(o_k,y)$ against the auxiliary term $\ell(o'_k,y)$.

Raising the logits of the selected negatives makes them stronger competitors of the ground-truth class, so minimizing $\ell(o'_k,y)$ forces the backbone to pull these confusable classes further apart, yielding sharper local decision boundaries. At inference time, FedTopo uses only the original logits $o_k$. Since the global topology and labels are needed solely during training, the modulation adds no deployment-time cost.

\section{Comparative Analysis}
\label{sec:analysis}

\subsection{Comparison of Global Knowledge Sharing Paradigms}

Existing model-heterogeneous FL methods differ mainly in the form of global knowledge they exchange. Sharing such knowledge requires some consistency across clients, which we call the \emph{cross-client alignment assumption}. 
\begin{definition}[Cross-Client Alignment Assumption]
\label{def:align}
The \emph{cross-client alignment assumption} of a global knowledge-sharing paradigm refers to the required consistency among client-side knowledge carriers before they can be aggregated, compared, or transferred across clients.
\end{definition}

Under Definition~\ref{def:align}, different paradigms impose this consistency on different knowledge carriers, such as model parameters, output predictions, feature representations, or relation structures. As illustrated in Fig.~\ref{fig:paradigm}, the required alignment gradually weakens from parameter sharing, through knowledge distillation and prototype sharing, to relation-level topology sharing.

\textbf{Partial parameter-sharing methods} directly exchange model parameters or partial network weights among clients. They impose the strongest assumption, since parameter aggregation is meaningful only when local models share compatible architectures and aligned parameter spaces. This assumption becomes difficult to satisfy when clients employ heterogeneous backbones or classifier heads.

\textbf{Knowledge-distillation-based methods} avoid direct parameter aggregation by transferring prediction-level information. They relax parameter-space alignment, but still require heterogeneous models to produce comparable logits or soft predictions over the same output space. Therefore, their alignment assumption shifts from parameter consistency to functional consistency, which is often supported by public or auxiliary data.

\textbf{Prototype-sharing methods} further relax the alignment requirement by exchanging class-level feature representations. This paradigm is more architecture-agnostic because prototypes can be extracted from heterogeneous local models. However, prototype aggregation still assumes that class-level representations from different clients are comparable in a shared feature coordinate space. Under heterogeneous backbones, such feature-space comparability may be violated, leading to semantically misaligned prototype coordinates.

\textbf{FedTopo} instead transfers relation-level topology rather than absolute feature coordinates. It models the relative relations among classes within each local feature space, without comparing prototype coordinates across clients. Therefore, the required consistency shifts from cross-client coordinate alignment to intra-client relation consistency. Such a weaker assumption yields a more transferable form of global knowledge under model heterogeneity. 

\begin{figure*}[t] 
    \centering
    \begin{subfigure}[b]{0.27\textwidth}
        \centering
        \includegraphics[width=\linewidth]{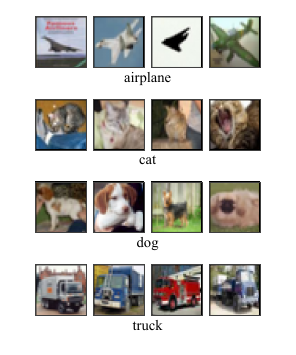}
        \caption{Real samples.}
        \label{fig:mec_5}
    \end{subfigure}
    \hfill 
    \begin{subfigure}[b]{0.44\textwidth}
        \centering
        \includegraphics[width=\linewidth]{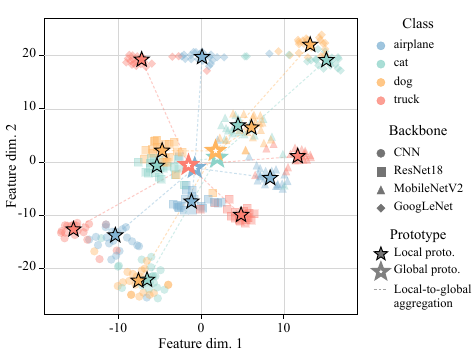}
        \caption{Prototype-coordinate mismatch.}
        \label{fig:mec_6}
    \end{subfigure}
    \hfill
    \begin{subfigure}[b]{0.27\textwidth}
        \centering
        \includegraphics[width=\linewidth]{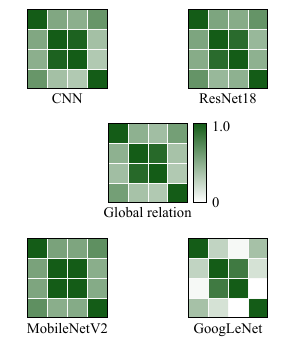}
        \caption{Relation topology consistency.}
        \label{fig:mec_7}
    \end{subfigure}
    \hfill
    \caption{Empirical mechanism visualization under model heterogeneity.}
    \label{fig:real_vis}
\end{figure*}

\begin{figure*}[t] 
    \centering
    \begin{subfigure}[b]{0.24\textwidth}
        \centering
        \includegraphics[width=1.055\linewidth]{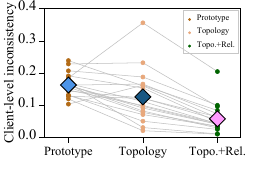}
        \caption{Observed-class inconsistency.}
        \label{fig:mec_1}
    \end{subfigure}
    \hfill 
    \begin{subfigure}[b]{0.24\textwidth}
        \centering
        \includegraphics[width=1.085\linewidth]{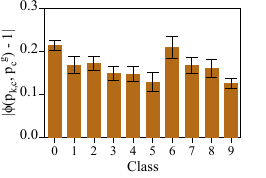}
        \caption{Prototype inconsistency.}
        \label{fig:mec_2}
    \end{subfigure}
    \hfill
    \begin{subfigure}[b]{0.25\textwidth}
        \centering
        \includegraphics[width=1.1\linewidth]{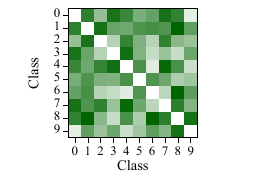}
        \caption{Topology inconsistency.}
        \label{fig:mec_3}
    \end{subfigure}
    \hfill
    \begin{subfigure}[b]{0.25\textwidth}
        \centering
        \includegraphics[width=1.07\linewidth]{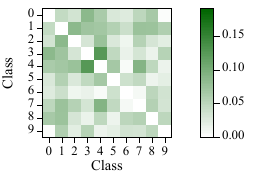}
        \caption{Reliability-modulated.}
        \label{fig:mec_4}
    \end{subfigure}
    
    \caption{Empirical mechanism visualization of prototype-coordinate and relation-topology consistency under model heterogeneity.}
    \label{fig:mechanism}
\end{figure*}

\subsection{Analytical Comparison of Prototype and Topology Sharing}
\label{sec:theory}

We analyze why relation-level topology can serve as a more robust global knowledge carrier than prototype coordinates under model heterogeneity. Our focus is not on proving convergence of heterogeneous local models, but on comparing how different shared objects are affected by client-specific feature spaces. 

\textbf{Prototype aggregation inherits the coordinate deviation.}
Prototype-sharing methods construct a global prototype by averaging local prototypes
\begin{equation}
p_c^g=\sum_{k\in\mathcal{K}_c}\beta_{k,c}\,p_{k,c},
\qquad
\sum_{k\in\mathcal{K}_c}\beta_{k,c}=1 .
\end{equation}

We decompose each local prototype as
\begin{equation}
p_{k,c}=q_{k,c}+\Delta_{k,c},
\end{equation}
where \(q_{k,c}\) denotes the latent semantic prototype of class \(c\) on client \(k\), and \(\Delta_{k,c}\) denotes the coordinate deviation induced by client \(k\)'s feature space. Let $q_c^g=\sum_{k\in\mathcal{K}_c}\beta_{k,c}q_{k,c}$ be the corresponding ideal aggregated semantic prototype. Then,
\begin{equation}
p_c^g-q_c^g
=
\sum_{k\in\mathcal{K}_c}\beta_{k,c}\Delta_{k,c},
\end{equation}
and by the triangle inequality,
\begin{equation}
\big\|p_c^g-q_c^g\big\|
\le
\sum_{k\in\mathcal{K}_c}\beta_{k,c}\big\|\Delta_{k,c}\big\|.
\end{equation}

This shows that prototype aggregation is directly exposed to the full per-client coordinate deviation. Such deviation is determined by heterogeneous feature spaces and cannot generally be removed by coordinate averaging alone.

\textbf{Relations remove a structured component of coordinate deviation.}
Instead of aggregating prototype coordinates, FedTopo aggregates the bounded cosine relation \(\phi(p_{k,c},p_{k,c'})\) defined in Eq.~\ref{eq:local_relation}. To clarify the benefit of this choice, we model a structured component of the feature-space gap as a client-specific linear transform
\begin{equation}
p_{k,c}=T_kq_{k,c},
\qquad
T_k=s_kQ_k+E_k,
\qquad
s_k>0,\quad Q_k^\top Q_k=I,
\end{equation}
where \(s_kQ_k\) denotes an isotropic scaling and orthogonal transform, and \(E_k\) denotes the remaining non-conformal residual.

\begin{proposition}[Conformal invariance of relations]
\label{prop:invariance}
If the residual vanishes, i.e., \(E_k=0\), then for every class pair \((c,c')\),
\begin{equation}
\phi(p_{k,c},p_{k,c'})
=
\phi(q_{k,c},q_{k,c'}),
\end{equation}
regardless of the scaling factor \(s_k\) and the orthogonal transform \(Q_k\).
\end{proposition}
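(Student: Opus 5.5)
The plan is to verify directly that the cosine similarity inside $\phi$ is unchanged by a conformal linear map. Then the affine rescaling $t\mapsto (1+t)/2$ in Eq.~\ref{eq:local_relation} carries the equality over to $\phi$ itself. Setting $E_k=0$ gives $p_{k,c}=s_kQ_kq_{k,c}$ and $p_{k,c'}=s_kQ_kq_{k,c'}$ for every class pair, so the argument reduces to two computations: one for the inner product and one for the norms.

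\textbf{Inner product.} Using $Q_k^\top Q_k=I$,
\begin{equation}
p_{k,c}^\top p_{k,c'} = s_k^2\, q_{k,c}^\top Q_k^\top Q_k\, q_{k,c'} = s_k^2\, q_{k,c}^\top q_{k,c'} .
\end{equation}

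\textbf{Norms.} By the same identity, $\|p_{k,c}\|^2 = s_k^2 \|q_{k,c}\|^2$. Since $s_k>0$, this gives $\|p_{k,c}\| = s_k\|q_{k,c}\|$, and likewise for $c'$.

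\textbf{Conclusion.} In the cosine ratio the numerator and denominator each carry the factor $s_k^2$, which cancels. Hence $\cos(p_{k,c},p_{k,c'})=\cos(q_{k,c},q_{k,c'})$. Applying $t\mapsto(1+t)/2$ to both sides yields the claimed equality of $\phi$, with no dependence on $s_k$ or $Q_k$.

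The only point needing care is well-definedness. The cosine, and therefore $\phi$, requires nonzero vectors. I would assume $q_{k,c}\neq 0$ for present classes, as is implicit in Eq.~\ref{eq:local_relation}. Under this assumption $\|p_{k,c}\|=s_k\|q_{k,c}\|>0$ because $s_k>0$, so both sides are defined simultaneously.

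I would also note that $Q_k$ need only satisfy $Q_k^\top Q_k=I$, i.e., have orthonormal columns. It may therefore map into a higher-dimensional feature space, which covers backbones whose output dimensions differ. Beyond this, there is no real obstacle; the result is a direct calculation.
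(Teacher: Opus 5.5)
Your proposal is correct and follows the paper's proof essentially verbatim: both show that $s_kQ_k$ multiplies inner products by $s_k^2$ and norms by $s_k$ (using $s_k>0$), so the cosine, and hence $\phi$, is unchanged. Your remarks that the prototypes must be nonzero and that $Q_k$ only needs orthonormal columns are valid refinements that the paper leaves implicit.
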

\begin{proof}
For any vectors $a$ and $b$, we have $\langle s_kQ_ka,\,s_kQ_kb\rangle=s_k^2\langle a,b\rangle$ and $\|s_kQ_ka\|=s_k\|a\|$, hence $\cos(s_kQ_ka,\,s_kQ_kb)=\cos(a,b)$. Since $\phi$ depends on its inputs only through this cosine, its value is unchanged.
\end{proof}

Proposition~\ref{prop:invariance} shows that cosine-based relations exactly cancel the orthogonal-and-scaling component of feature-space transformations, whereas prototype coordinates change under the same component. In other words, a conformal transform may induce a large coordinate deviation \(\Delta_{k,c}=(s_kQ_k-I)q_{k,c}\) that is inherited by prototype averaging, but this component does not affect the relation value. Real heterogeneous backbones may also introduce non-conformal distortions, sampling noise, and optimization errors; these residual factors can still perturb the estimated relation topology. Thus, relation sharing does not assume perfect cross-client alignment, but removes a structured source of coordinate mismatch before aggregation. 

\textbf{Reliability weighting suppresses residual relation noise.}
The residual \(E_k\), finite-sample noise, and sparsely observed class pairs may still perturb each local relation. We write
\begin{equation}
R_k(c,c')=R^\star(c,c')+\xi_k(c,c'),
\end{equation}
where \(R^\star(c,c')\) is the ideal global relation and \(\xi_k(c,c')\) is the local relation-estimation error. Aggregating with the normalized reliability weights
\begin{equation}
\alpha_k(c,c')
=
\frac{w_k(c,c')}{\sum_j w_j(c,c')+\epsilon}
\end{equation}
gives
\begin{equation}
R^g(c,c')-R^\star(c,c')
=
\sum_{k\in\mathcal{P}^{(t)}}\alpha_k(c,c')\xi_k(c,c').
\end{equation}

This weight becomes large only when both classes are observed and sufficiently represented on client \(k\). Therefore, relations estimated from few samples receive smaller weights and contribute less to the global topology. During local training, the reliability matrix \(S^g\) further scales each relation by its total support
\begin{equation}
\bar{R}^g=R^g\odot S^g,
\end{equation}
so weakly supported relations provide weaker topology guidance.

In sum, cosine relations remove the conformal coordinate mismatch inherited by prototype averaging, and reliability-aware aggregation suppresses the residual noise from non-conformal distortions and sparse observations.

\subsection{Empirical Mechanism Visualization}
The analysis in Section~\ref{sec:theory} attributes the reliability of FedTopo to two effects. On the one hand, relation-level topology avoids the cross-client coordinate mismatch that prototype aggregation inherits. On the other hand, reliability modulation suppresses relations estimated from sparsely supported class pairs. We examine both effects empirically, and find that relation topology is already more consistent than prototype coordinates, while reliability weighting acts selectively, targeting the sparsely supported pairs whose relations are noisiest. 

As an empirical counterpart to Fig.~\ref{fig:paradigm}, we first visualize backbone-induced coordinate mismatch in isolation. To remove the confounding influence of client-side data bias, we train all heterogeneous backbones on the full CIFAR-10 training set and feed them the same real samples, as shown in Fig.~\ref{fig:real_vis}(a). Fig.~\ref{fig:real_vis}(b) shows that same-class local prototypes are scattered in the projected feature space, indicating that prototype coordinates are not naturally aligned even when all models observe the same data distribution.  In contrast, Fig.~\ref{fig:real_vis}(c) shows that the class-relation topologies induced by different backbones exhibit similar structural patterns, suggesting that relation-level topology provides a more consistent knowledge carrier than absolute prototype coordinates.

Beyond this intuitive visualization, we conduct a quantitative mechanism analysis under the federated setting. Rather than measuring final task performance, these mechanism-oriented metrics test whether relation-level topology is more stable than absolute prototype coordinates, and whether reliability modulation further attenuates unreliable relations. We define three client-level inconsistency metrics, evaluated only on the observed class set of each client, since missing classes yield no valid local prototypes or class-pair relations. To keep the prototype and relation metrics directly comparable, we measure both on the same bounded similarity scale $\phi(a,b)=\tfrac{1+\cos(a,b)}{2}\in[0,1]$, so that each metric is an $L_1$ deviation on this common scale.

\begin{figure}[t] 
    \centering
    \begin{subfigure}[b]{0.24\textwidth}
        \centering
        \includegraphics[width=\linewidth]{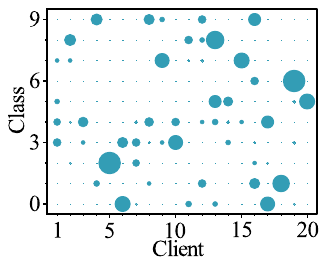}    
        \caption{$\alpha=0.1$.}
        \label{fig:data_1}
    \end{subfigure}
    \hfill 
    \begin{subfigure}[b]{0.24\textwidth}
        \centering
        \includegraphics[width=0.94\linewidth]{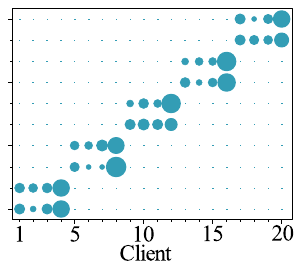}
        \caption{Pathological.}
        \label{fig:data_2}
    \end{subfigure}
    \hfill

    \caption{Visualization of non-IID data distributions on CIFAR-10. The dot size indicates the sample quantity per class.}
    \label{fig:data}
\end{figure}

For prototype coordinates, we quantify the client-level inconsistency as the deviation of each local prototype from its global prototype under perfect alignment ($\phi=1$)
\begin{equation}
D_k^{p}
=
\frac{1}{|\mathcal{C}_k|}
\sum_{c\in\mathcal{C}_k}
\big|\,\phi(p_{k,c},p_c^g)-1\,\big|,
\end{equation}
where $\mathcal{C}_k=\{c\mid n_{k,c}>0\}$ denotes the observed class set of client $k$. For relation topology, we analogously compare the local relation matrix against the global topology over the observed class pairs only
\begin{equation}
D_k^{r}
=
\frac{1}{|\Omega_k|}
\sum_{(c,c')\in\Omega_k}
\left|R_k(c,c')-R^g(c,c')\right|,
\end{equation}
where $\Omega_k=\{(c,c')\mid c,c'\in\mathcal{C}_k,\,c<c'\}$. To isolate the effect of reliability modulation, we further weight each pairwise discrepancy by $S^g(c,c')$
\begin{equation}
D_k^{rs}
=
\frac{1}{|\Omega_k|}
\sum_{(c,c')\in\Omega_k}
\left|R_k(c,c')-R^g(c,c')\right|S^g(c,c').
\end{equation}

On this common scale, comparing $D_k^{p}$ with $D_k^{r}$ tests the first effect, while comparing $D_k^{rs}$ with $D_k^{r}$ isolates the gain from reliability modulation.

\begin{table*}[t]
\centering
\caption{Performance comparison under Dirichlet and pathological non-IID settings.}
\label{tab:main_results}
\renewcommand{\arraystretch}{1.15}
\resizebox{\textwidth}{!}{
\begin{tabular}{cccccccc}
\toprule
\multirow{2}{*}{\textbf{Type}} & \multirow{2}{*}{\textbf{Method}}
& \multicolumn{3}{c}{\textbf{Dirichlet Setting}}
& \multicolumn{3}{c}{\textbf{Pathological Setting}} \\
\cmidrule(lr){3-5} \cmidrule(lr){6-8}
& & \textbf{CIFAR-10} & \textbf{CIFAR-100} & \textbf{Tiny-ImageNet}
& \textbf{CIFAR-10} & \textbf{CIFAR-100} & \textbf{Tiny-ImageNet} \\
\midrule

\textit{Reference} & \textit{Local} & $85.43\pm0.39$ & $39.93\pm0.23$ & $23.73\pm0.19$
& $83.38\pm0.33$ & $53.29\pm0.30$ & $31.47\pm0.27$ \\
\midrule

\multirow{4}{*}{\makecell[c]{Parameter\\sharing}}
& LG-FedAvg \cite{liang2020think} & $84.59\pm0.44$ & $38.49\pm0.20$ & $23.22\pm0.12$
& $84.03\pm0.36$ & $55.64\pm0.57$ & $31.69\pm0.25$ \\
& FedGen \cite{zhu2021data} & $84.45\pm0.52$ & $37.67\pm0.17$ & $23.59\pm0.43$
& $83.59\pm0.57$ & $55.62\pm0.72$ & $31.65\pm0.62$ \\
& FedGH \cite{yi2023fedgh} & $84.19\pm0.37$ & $37.90\pm0.39$ & $23.50\pm0.51$
& $84.23\pm0.45$ & $54.17\pm0.35$ & $30.89\pm0.43$ \\
& FedSCE \cite{zhang2025subspace} & $86.18\pm0.55$ & $\underline{42.01\pm0.35}$ & $\underline{25.94\pm0.43}$
& $\underline{85.84\pm0.41}$ & $55.56\pm0.46$ & $\underline{32.80\pm0.24}$ \\

\midrule
\multirow{3}{*}{\makecell[c]{Knowledge\\distillation}}
& FML \cite{shen2020federated} & $\underline{86.44\pm0.46}$ & $41.83\pm0.33$ & $23.91\pm0.38$
& $84.81\pm0.32$ & $55.05\pm0.44$ & $31.60\pm0.34$ \\
& FedKD \cite{wu2022communication} & $86.07\pm0.41$ & $41.45\pm0.45$ & $24.45\pm0.37$
& $83.68\pm0.37$ & $55.43\pm0.35$ & $31.52\pm0.28$ \\
& FedMRL \cite{yi2024federated} & $85.56\pm0.31$ & $39.57\pm0.42$ & $22.48\pm0.54$
& $84.91\pm0.31$ & $\underline{57.45\pm0.56}$ & $31.95\pm0.49$ \\

\midrule
\multirow{3}{*}{\makecell[c]{Prototype\\sharing}}
& FedProto \cite{tan2022fedproto} & $81.45\pm0.51$ & $39.75\pm0.42$ & $17.51\pm0.32$
& $81.35\pm0.42$ & $52.98\pm0.69$ & $30.45\pm0.68$ \\
& FedTGP \cite{zhang2024fedtgp} & $83.69\pm0.42$ & $40.14\pm0.35$ & $23.71\pm0.42$
& $83.81\pm0.52$ & $55.04\pm0.37$ & $31.09\pm0.31$ \\
& FedRE \cite{yao2026fedre} & $85.39\pm0.37$ & $40.80\pm0.40$ & $23.53\pm0.34$
& $83.31\pm0.33$ & $54.41\pm0.35$ & $31.63\pm0.25$ \\

\midrule
Relation topology
& FedTopo & $\mathbf{87.38\pm0.26}$ & $\mathbf{43.05\pm0.17}$ & $\mathbf{26.71\pm0.25}$
& $\mathbf{86.26\pm0.19}$ & $\mathbf{59.30\pm0.18}$ & $\mathbf{34.14\pm0.22}$ \\[2pt]

\bottomrule
\end{tabular}
}
\end{table*}

Fig.~\ref{fig:mechanism}(a) presents a client-level paired comparison of the prototype, topology, and reliability-modulated inconsistencies, where each line connects the three values of the same client. The average prototype inconsistency is $0.163$, already above the topology inconsistency of $0.126$, showing that relation-level structure is more stable across clients than absolute prototype coordinates. The few clients whose topology value slightly exceeds the prototype one all have very few observed classes, whose scarce class pairs make the raw relation estimate noisier. More importantly, reliability modulation further reduces the effective inconsistency to $0.058$, with the largest gains on class-sparse clients whose few observed pairs yield the noisiest relations. This confirms that statistically weak class-pair relations receive lower influence during topology-guided local optimization. 

Fig.~\ref{fig:mechanism}(b) provides a class-level view of prototype dispersion, reporting for each class the average deviation $|\phi(p_{k,c},p_c^g)-1|$ between local prototypes and the aggregated prototype center. The consistently high values indicate that same-class prototypes learned by heterogeneous clients are widely scattered in the feature space. Fig.~\ref{fig:mechanism}(c) then examines the analogous local–global discrepancy at the relation level by visualizing $|R_k(c,c')-R^g(c,c')|$ for each class pair. Because relation topology encodes relative class relationships rather than absolute feature positions, these discrepancies concentrate in a smaller range, indicating that relation-level topology is the more stable knowledge carrier. Finally, Fig.~\ref{fig:mechanism}(d) shows the reliability-modulated discrepancy, where weakly supported class-pair entries are further attenuated by $S^g$.

Together, these observations confirm the analysis in Section~\ref{sec:theory}. Relation-level topology is more stable than absolute prototypes under coordinate mismatch, and reliability weighting complements it by selectively discounting discrepancy on weakly supported pairs.

\section{Experiments}

\subsection{Experimental Setup}
\label{sec:setup}

\subsubsection{Datasets and Non-IID Partition}
We evaluate FedTopo on three standard controlled FL benchmarks, namely CIFAR-10, CIFAR-100 \cite{krizhevsky2009learning}, and Tiny-ImageNet \cite{russakovsky2015imagenet, le2015tiny}. These datasets contain $10$, $100$, and $200$ classes, respectively. To assess robustness under data heterogeneity, we construct non-IID client datasets with two widely adopted label-skew settings. In the Dirichlet setting \cite{hsu2019measuring, yurochkin2019bayesian}, for each class $c$, the sample proportions over the $N$ clients are drawn as $(q_{c,1},\ldots,q_{c,N})\sim\mathrm{Dir}(\alpha_{\mathrm{Dir}})$,
where $q_{c,k}$ denotes the proportion of class-$c$ samples allocated to client $k$. A smaller $\alpha_{\mathrm{Dir}}$ indicates stronger label skew, and we set $\alpha_{\mathrm{Dir}}=0.1$ by default. In the pathological setting, each client holds samples from $2$, $10$, and $20$ classes on CIFAR-10, CIFAR-100, and Tiny-ImageNet, respectively. This produces local datasets with many missing classes, representing an extreme case of label skew. Fig.~\ref{fig:data} visualizes the CIFAR-10 client-class distributions under the Dirichlet ($\alpha_{\mathrm{Dir}}=0.1$) and pathological settings, where the bubble size denotes the per-class sample count on each client. 

\subsubsection{Model Heterogeneity Settings}
To simulate model heterogeneity, each client is assigned a local architecture from a heterogeneous model pool. The pool comprises eight architectures, namely CNN \cite{mcmahan2017communication}, GoogLeNet \cite{szegedy2015going}, MobileNetV2 \cite{sandler2018mobilenetv2}, and ResNet-18/34/50/101/152 \cite{he2016deep}. Architectures are assigned to clients round-robin by client index. To produce feature representations of a unified dimension $d$, we append an adaptive average pooling layer after each backbone. By default, we set \(d=512\).

\subsubsection{Compared Methods}
We compare FedTopo with representative model-heterogeneous FL methods from three major knowledge-sharing paradigms. For partial parameter-sharing methods, we include LG-FedAvg \cite{liang2020think}, FedGen \cite{zhu2021data}, FedGH \cite{yi2023fedgh}, and FedSCE \cite{zhang2025subspace}. For knowledge distillation-based methods, we include FML \cite{shen2020federated}, FedKD \cite{wu2022communication}, and FedMRL \cite{yi2024federated}. For prototype-sharing methods, we include FedProto \cite{tan2022fedproto}, FedTGP \cite{zhang2024fedtgp}, and FedRE \cite{yao2026fedre}. We additionally include Local, where each client trains its model independently without any collaboration, as a no-sharing reference that indicates the difficulty of each setting.

\begin{figure}[t]
\centering
\subfloat[CIFAR-10]{
\includegraphics[width=0.475\linewidth]{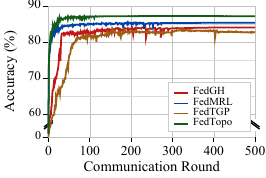}
\label{fig:cifar10-curve}
}
\subfloat[CIFAR-100]{
\includegraphics[width=0.475\linewidth]{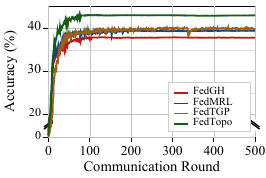}
\label{fig:cifar100-curve}
}
\caption{Convergence on CIFAR-10/100 ($\alpha_{\mathrm{Dir}}=0.1$).}
\label{fig:two_plots}
\end{figure}

\subsubsection{Implementation Details}
All experiments are implemented in PyTorch 2.8.0 with Python 3.10 and conducted on an NVIDIA RTX 3090 GPU. We use $N=20$ clients with full participation (ratio $1.0$) in each round. We run 500 communication rounds, and each participating client performs one local epoch with a batch size of $256$. All local models are trained with SGD at a learning rate of $0.05$. For FedTopo, we set the topology-aware loss weight \(\lambda=1.0\), the modulation strength \(\gamma=1.0\), the number of topology-guided negative classes \(q=3\), and the relation EMA coefficient \(\rho=0.7\), following the sensitivity analysis in Fig.~\ref{fig:param}. Topology guidance is enabled only after $3$ warm-up rounds, since the global topology is still unreliable in the earliest rounds.

\subsubsection{Evaluation Metrics}
Each client's local data is split into a local training set $75\%$ and a local test set $25\%$ drawn from the same label distribution. We evaluate each heterogeneous model on its own local test set. For each method, we report the test accuracy averaged across all clients, given as the mean and standard deviation over three independent runs.

\subsection{Overall Performance Comparison}

Table~\ref{tab:main_results} reports the overall performance under both Dirichlet and pathological non-IID settings on CIFAR-10, CIFAR-100, and Tiny-ImageNet. FedTopo achieves the best performance across all evaluated settings, obtaining $87.38\%$, $43.05\%$, and $26.71\%$ under the Dirichlet setting and $86.26\%$, $59.30\%$, and $34.14\%$ under the pathological setting on the three datasets. 

These results show that FedTopo consistently outperforms all three sharing paradigms. As analyzed in Section~\ref{sec:analysis}, their reliance on cross-client alignment weakens under heterogeneous backbones, whereas FedTopo shares only relative class relations. Under the personalized local-test protocol, the Local reference is already strong, so a useful sharing scheme must avoid negative transfer. Several baselines fail to surpass Local, whereas FedTopo improves over it across all six settings by $1.95$ to $6.01$ points, confirming consistently positive transfer. FedTopo's lead over the strongest baseline is largest on the many-class datasets under the pathological setting, reaching $+1.85$ on CIFAR-100 and $+1.34$ on Tiny-ImageNet, where data heterogeneity is most severe. 

Fig.~\ref{fig:two_plots} compares convergence on CIFAR-10 and CIFAR-100. FedTopo converges the fastest and stabilizes at the highest accuracy on both datasets with the smoothest late-training curve, whereas the prototype-based FedTGP converges more slowly and oscillates markedly. This confirms that relation topology delivers faster and more stable global guidance under model heterogeneity.

\subsection{Effect of Model and Data Heterogeneity}

We further vary the degree of model and data heterogeneity to examine whether FedTopo's advantage holds consistently across different levels.

\subsubsection{Effect of Model Heterogeneity}
We evaluate different methods under increasing model heterogeneity while keeping the data partition unchanged. The Mild setting contains CNN and ResNet-18, the Moderate setting further includes GoogLeNet and MobileNetV2, and the Strong setting follows the 8-backbone heterogeneous setup described in Section~\ref{sec:setup}. 

As shown in Fig.~\ref{fig:heterogeneity}(a), FedTopo achieves the best performance at all heterogeneity levels, decreasing only slightly from $87.60\%$ at Mild to $87.38\%$ at Strong. As backbone heterogeneity grows, FedTopo maintains stable accuracy, whereas FedMRL drops noticeably and FedTGP fluctuates more. This indicates that methods relying on shared heads, auxiliary models, or prototype-level guidance are more sensitive to backbone-induced feature-space mismatch.

In contrast, FedTopo transfers knowledge through relation-level topology rather than aggregating feature coordinates, so its supervisory signal stays transferable as backbones diverge.

\subsubsection{Effect of Data Heterogeneity}

We further evaluate robustness under varying data heterogeneity by changing the Dirichlet parameter $\alpha_{\mathrm{Dir}}$. A smaller $\alpha_{\mathrm{Dir}}$ yields a more skewed label distribution, leaving each client with fewer effective classes. Under personalized local evaluation, this makes the local task easier, so accuracy decreases monotonically as $\alpha_{\mathrm{Dir}}$ increases.

\begin{figure}[t]
\centering
\subfloat[Backbone heterogeneity.]{
\includegraphics[width=0.475\linewidth]{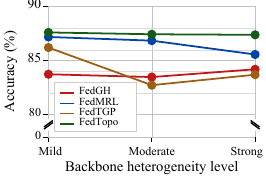}
\label{fig:backbone_het}
}
\subfloat[Data heterogeneity.]{
\includegraphics[width=0.475\linewidth]{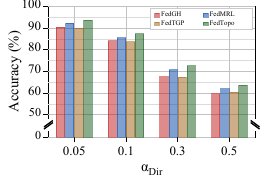}
\label{fig:data_het}
}
\caption{Effect of model and data heterogeneity on CIFAR-10.}
\label{fig:heterogeneity}
\end{figure}

As shown in Fig.~\ref{fig:heterogeneity}(b), FedTopo achieves the best performance under all Dirichlet settings. It obtains $93.74\%$, $87.38\%$, $72.59\%$, and $63.57\%$ accuracy at $\alpha_{\mathrm{Dir}}=0.05, 0.1, 0.3, 0.5$, improving over the strongest baseline by $1.45$, $1.82$, $1.76$, and $1.66$ percentage points, respectively. All methods follow the same monotonic trend, while FedTopo retains its margin at every skew level, confirming the stable benefit of relation-level topology sharing under data heterogeneity.

\subsection{Ablation Study}

We conduct an ablation study to investigate the contribution of each key component in FedTopo, with results reported in Table~\ref{tab:ablation}. All variants are evaluated on CIFAR-10 under the Dirichlet setting with $\alpha_{\mathrm{Dir}}=0.1$, using the same protocol as the complete FedTopo. Two variants isolate the role of collaboration. The \textbf{w/o Topo. Loss} variant drops the auxiliary loss entirely, reducing FedTopo to standalone local training. The \textbf{w/ Local topology} variant keeps the loss but guides each client with its own local relations $R_k$ rather than the shared global topology $(R^g,S^g)$. The other four variants each remove one internal component—relation reliability, Top-$q$ selection, the missing-aware mask, or the topology EMA. Here, \(\Delta\) denotes the accuracy drop compared with the complete FedTopo.

As shown in Table~\ref{tab:ablation}, removing any component degrades the performance. The two collaboration anchors decompose this gain. Starting from $85.43\%$ for standalone training, local topology guidance alone raises accuracy to $86.01\%$, a gain of $0.58$, and sharing the aggregated global topology lifts it to $87.38\%$, an additional $1.37$. Federated sharing thus contributes more than twice the gain of local guidance, confirming that FedTopo's improvement comes chiefly from \emph{sharing} relation topology rather than from the auxiliary loss alone. Among the internal components, dropping Top-$q$ selection causes the largest reduction ($1.54\%$), since using all negatives injects noisy feedback. Removing the missing-aware mask or relation reliability costs $1.29\%$ and $0.94\%$, indicating that locally absent or weakly supported relations should be suppressed. Disabling the topology EMA costs $1.02\%$, confirming the benefit of smoothing the topology across rounds. Together, these results validate both federated topology sharing and the reliability-aware, selective design of FedTopo.

\begin{table}[t]
\centering
\caption{Ablation study of FedTopo. \(\Delta\) denotes the accuracy drop compared with the complete FedTopo.}
\label{tab:ablation}
\begin{tabular}{lc}
\toprule
\textbf{Variant} & \textbf{Accuracy (\%) \((\Delta)\)} \\
\midrule
w/o Topo. Loss          & 85.43 \, (1.95) \\
w/ Local topology       & 86.01 \, (1.37) \\ 
w/o Reliability \(S^g\) & 86.44 \, (0.94) \\
w/o Top-\(q\)           & 85.84 \, (1.54) \\
w/o Missing Mask        & 86.09 \, (1.29) \\
w/o Topology EMA        & 86.36 \, (1.02) \\
\textbf{FedTopo}        & \textbf{87.38} \\
\bottomrule
\end{tabular}
\end{table}

\begin{figure*}[htbp]
\centering
\begin{subfigure}{0.245\textwidth}
\centering
\includegraphics[width=\linewidth]{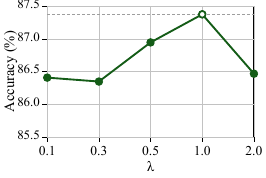}
\caption{Loss weight \(\lambda\).}
\label{fig:1a}
\end{subfigure}
\hfill
\begin{subfigure}{0.245\textwidth}
\centering
\includegraphics[width=\linewidth]{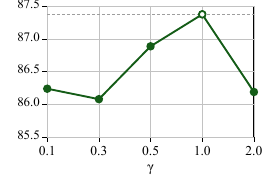}
\caption{Modulation strength \(\gamma\).}
\label{fig:1b}
\end{subfigure}
\hfill
\begin{subfigure}{0.245\textwidth}
\centering
\includegraphics[width=\linewidth]{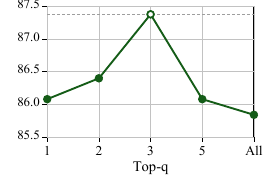}
\caption{Top-\(q\) negatives.}
\label{fig:1c}
\end{subfigure}
\hfill
\begin{subfigure}{0.245\textwidth}
\centering
\includegraphics[width=\linewidth]{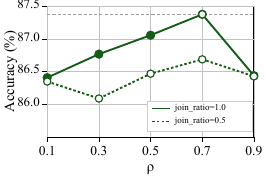}
\caption{EMA coefficient \(\rho\).}
\label{fig:1d}
\end{subfigure}
\caption{Parameter sensitivity analysis of FedTopo. The hollow marker indicates the best-performing setting.}
\label{fig:param}
\end{figure*}

\begin{table*}[t]
\caption{Comparison of communication and extra computation overhead on CIFAR-10.}
\label{table_overhead_comparison}
\centering
\begin{tabular}{lccccc}
\hline
\textbf{Method} & \textbf{Upload Params.} & \textbf{Download Params.} & \textbf{Server Train} & \textbf{Train FLOPs} & \textbf{Infer FLOPs} \\
\hline
FedGH   & 5,120   & 5,130   & \checkmark & 40.69G / extra batch & 0 \\
FedMRL  & 878,538 & 878,538 & $\times$   & 4.30G / batch        & 4.30G / batch \\
FedTGP  & 5,120   & 5,120   & \checkmark & 0.39M / batch        & 3.93M / batch \\
\textbf{FedTopo} & \textbf{110} & \textbf{200} & $\times$ & \textbf{0.013M / batch} & \textbf{0} \\
\hline
\multicolumn{6}{l}{\footnotesize \emph{Note:} For FedGH, ``extra batch'' denotes one additional feature-extraction batch for prototype collection after local training.}
\end{tabular}
\end{table*}

\subsection{Parameter Sensitivity Analysis}

We analyze FedTopo's sensitivity to four key hyperparameters, namely the topology loss weight \(\lambda\), the modulation strength \(\gamma\), the number of selected topology-similar negative classes \(q\), and the EMA coefficient \(\rho\). We vary one hyperparameter at a time, fixing the others to their default values. 

\subsubsection{Effect of the Topology Loss Weight \(\lambda\)}
The parameter \(\lambda\) controls the contribution of the topology-aware auxiliary loss. As shown in Fig.~\ref{fig:param}(a), FedTopo achieves the best accuracy when \(\lambda=1.0\). When \(\lambda\) is small, the topology-guided supervision becomes weak and brings limited improvement. When \(\lambda\) is too large, the auxiliary objective may overemphasize topology-similar negative competition and interfere with the standard classification loss. A moderate weight therefore best balances local classification against topology-guided discrimination.

\subsubsection{Effect of the Modulation Strength \(\gamma\)}
The parameter \(\gamma\) determines the strength of logit modulation for topology-similar negative classes. Fig.~\ref{fig:param}(b) shows that \(\gamma=1.0\) provides the best performance. A small \(\gamma\) only slightly increases the logits of topology-similar negatives, leading to insufficient hard negative competition. In contrast, an overly large $\gamma$ over-amplifies these negatives and disturbs local optimization. A moderate $\gamma$ therefore strengthens topology-similar negatives just enough without overwhelming the standard objective.

\subsubsection{Effect of the Top-\(q\) Negative Selection}
The parameter \(q\) controls how many topology-similar negative classes are selected for logit enhancement. As shown in Fig.~\ref{fig:param}(c), the best performance is achieved when \(q=3\). A very small \(q\) provides a narrow topology signal, since only a few negatives are emphasized. Conversely, selecting too many negatives brings in weakly related classes and dilutes the truly confusing ones. The drop under the \textsf{All} setting confirms this, since using every negative injects noisy feedback. A small, selective set of topology-similar negatives is therefore preferable to using all of them.

\subsubsection{Effect of the EMA Coefficient \(\rho\)}
The parameter \(\rho\) controls the EMA smoothing of the global topology across communication rounds. Fig.~\ref{fig:param}(d) shows that \(\rho=0.7\) achieves the best accuracy. A small \(\rho\) makes the global topology rely heavily on the current round, leaving it sensitive to client sampling and local class-pair fluctuations. An overly large \(\rho\) preserves too much historical topology and slows down the adaptation to newly aggregated semantic relations. Thus, a balanced EMA coefficient improves both the stability and adaptability of global topology estimation. Moreover, $\rho=0.7$ remains optimal under a lower join ratio of $0.5$, showing that this EMA setting is robust to partial client participation, even though the overall accuracy decreases as fewer clients contribute each round.

Overall, FedTopo performs best with moderate topology guidance and selective negative enhancement. We adopt $\lambda=1.0$, $\gamma=1.0$, $q=3$, and $\rho=0.7$ as the default configuration. These results confirm that each hyperparameter contributes to stable and effective topology-guided training.

\subsection{Communication and Computation Cost}

We analyze the communication and computation cost on CIFAR-10. Table~\ref{table_overhead_comparison} reports the upload and download parameters per client per round and the extra FLOPs beyond the standard local forward and backward pass.

FedTopo only uploads the local relation topology and class statistics, and downloads the global relation topology and reliability matrix. With \(C=10\), this corresponds to only 110 uploaded parameters and 200 downloaded parameters. For larger label spaces the cost grows as $O(C^2)$; even on Tiny-ImageNet ($C=200$) it remains below the $C\times d$ cost of prototype methods. In contrast, methods that upload class-wise vectors, such as FedGH and FedTGP, transmit $C\times d = 5{,}120$ parameters per round when $d=512$, while FedMRL communicates an auxiliary global model of $878{,}538$ parameters. This makes the communication cost of FedTopo substantially smaller in this setting.

For computation cost, FedTopo only introduces a topology-guided auxiliary loss during local training. Since the topology modulation operates on class-level logits rather than high-dimensional features or model parameters, its extra training cost is only 0.013M FLOPs per batch. More importantly, FedTopo introduces no inference overhead, because the global topology is used only during training and each client directly uses its local model for testing. By comparison, FedTGP requires prototype-distance computation during inference, and FedMRL requires an auxiliary global model and projection module during both training and inference. These results show that FedTopo guides training efficiently while adding no inference cost at deployment.

\section{Conclusion}
In this work, we propose FedTopo, a model-heterogeneous FL framework for global knowledge sharing under heterogeneous feature spaces and non-IID data distributions. Different from methods that share model parameters, logits, or prototype coordinates, FedTopo communicates relation-level class topology, which provides a more transferable form of global knowledge across heterogeneous client backbones. Specifically, FedTopo constructs local relation topologies on the clients, performs reliability- and missing-aware aggregation on the server, and uses the global topology to guide local training through topology-guided negative enhancement. Since the topology guidance is only used during training, FedTopo introduces no extra inference overhead. Extensive experiments demonstrate that FedTopo consistently outperforms baselines across different benchmarks, backbone heterogeneity levels, and data partitions, while maintaining low communication and computation cost. A limitation is that the communication cost grows quadratically with the number of classes. Future work will explore extending topology-based knowledge sharing to multi-modal and open-set federated learning scenarios.

\section*{Acknowledgment}
Supported by the Fundamental Research Funds for the Central Universities (No. 2025YJS042) and the National Natural Science Foundation of China (No. 62576027).

\bibliographystyle{IEEEtran}
\bibliography{ref}

\end{document}